\documentclass{article}





\usepackage[nonatbib,preprint]{neurips_2022}

\usepackage[utf8]{inputenc} 
\usepackage[T1]{fontenc}    
\usepackage{hyperref}       
\usepackage{url}            
\usepackage{booktabs}       
\usepackage{amsfonts}       
\usepackage{nicefrac}       
\usepackage{microtype}      
\usepackage{xcolor}         
\usepackage{amsmath}
\usepackage{graphicx}
\usepackage{caption}
\usepackage{subcaption}
\usepackage{tabularx}
\usepackage{multirow}

\newcommand{\toolname}{CertiFair}

\usepackage{wrapfig}
\usepackage{amsthm}
 \newtheorem{theorem}{\textbf{Theorem}}[section]

 \newtheorem{proposition}[theorem]{Proposition}
 
 \newtheorem{definition}[theorem]{Definition}

\title{
CertiFair: A Framework for Certified Global Fairness of Neural Networks
}

%

\author{%
   Haitham Khedr and Yasser Shoukry\\
  Department of Electrical Engineering and Computer Science\\
  University of California, Irvine \\
  Irvine, CA 92697, USA \\
  \texttt{\{hkhedr, yshoukry\}@uci.edu} 
}

\begin{document}

\maketitle

\begin{abstract}
We consider the problem of whether a Neural Network (NN) model satisfies global individual fairness. Individual Fairness (defined in \cite{dwork2012fairness}) suggests that \textit{similar} individuals with respect to a certain task are to be treated \textit{similarly} by the decision model. In this work, we have two main objectives. The first is to construct a verifier which checks whether the fairness property holds for a given NN in a classification task or provide a counterexample if it is violated, i.e., the model is fair if all similar individuals are classified the same, and unfair if a pair of similar individuals are classified differently. To that end, We construct a sound and complete verifier that verifies global individual fairness properties of ReLU NN classifiers using distance-based similarity metrics. The second objective of this paper is to provide a method for training provably fair NN classifiers from unfair (biased) data. We propose a fairness loss that can be used during training to enforce fair outcomes for similar individuals. We then provide provable bounds on the fairness of the resulting NN. We run experiments on commonly used fairness datasets that are publicly available and we show that global individual fairness can be improved by 96 \% without significant drop in test accuracy.
\end{abstract}

\section{Introduction}
\label{sec:intro}

Neural Networks (NNs) have become an increasingly central component of modern decision-making systems, including those that are used in sensitive/legal domains such as crime prediction \cite{brennan2009evaluating}, credit assessment \cite{Dua2019}, income prediction \cite{Dua2019}, and hiring decisions. However, studies have shown that these systems are prone to biases \cite{mehrabi2021survey} that deem their usage \textit{unfair} to unprivileged users based on their age, race, or gender. The bias is usually either inherent in the training data or introduced during the training process. Mitigating algorithmic bias has been studied in the literature ~\cite{zhang2018mitigating,xu2018fairgan,mehrabi2021attributing} in the context of group and individual fairness. However, the fairness of the NN is considered only \textit{empirically} on the test data with the hope that it represents the underlying data distribution. 

Unlike the \emph{empirical} techniques for fairness, we are interested to provide \emph{provable certificates} regarding the fairness of a NN classifier. In particular, we focus on the ``global individual fairness'' property which states that a NN classification model is globally individually fair if \emph{all} \emph{similar} pairs of inputs $x$, $x'$ are assigned the same class. We use a feature-wise closeness metric instead of an $\ell_p$ norm to evaluate similarity between individuals, i.e, a pair $x$, $x'$ is similar if for all features $i$, $|x_i - x'_i| \leq \delta_i$. 
Given this fairness notion, the objective of this paper is twofold. First, it aims to provide a sound and complete formal verification framework that can automatically certify whether a NN satisfy the fairness property or produce a concrete counterexample showing two inputs that are not treated fairly by the NN. Second, this paper provides a training procedure for certified fair training of NNs even when the training data is biased.

\paragraph{Challenge:} Several existing techniques focus on generalizing ideas from \emph{adversarial robustness} to reason about NN fairness~\cite{yurochkin2019training,ruoss2020learning}. By viewing unfairness as an adversarial noise that can flip the output of a classifier, these techniques can certify the fairness of a NN \emph{locally}, i.e., in the neighborhood of a given individual input. In contrast, this paper focuses on \emph{global} fairness properties where the goal is to ensure that the NN is fair with respect to \emph{all} the similar inputs in its domain. Such a fundamental difference precludes the use of existing techniques from the literature on adversarial robustness and calls for novel techniques that can provide provable fairness guarantees.


\paragraph{This work:} We introduce \toolname, a framework for certified global fairness of NNs. This framework consists of two components. First, a verifier that can prove whether the NN satisfies the fairness property or produce a concrete counterexample that violates the fairness property. This verifier is motivated by the recent results in the ``relational verification'' problem \cite{barthe2011relational} where the goal is to verify \emph{hyperproperties} that are defined over pairs of program traces. Our approach is based on the observation that the global individual fairness property~\eqref{eq:fairness} can be seen as a \emph{hyperproperty} and hence we can generalize the concept of \emph{product programs} to \emph{product NNs} that accepts a pair of inputs $(x,x')$, instead of a single input $x$, and generates two independent outputs for each input. A global fairness property for this product NN can then be verified using existing NN verifiers. Moreover, and inspired by methods in certified robustness, we also propose a training procedure for \emph{certified fairness} of NNs. Thanks again to the product NN, mentioned above, one can establish upper bounds on fairness and use it as a regularizer during the training of NNs. Such a regularizer will promote the fairness of the resulting model, even if the data used for training is biased and can lead to an unfair classifier. While such \emph{fairness regularizer} will enhance the fairness of the model, one needs to check if the fairness property holds globally using the sound and complete verifier mentioned above.



\paragraph{Contributions:} Our main contributions are:
\begin{itemize}
    \item To the best of our knowledge, we present the first sound and complete NN verifier  for global individual fairness properties.
    \item A method for training NN classifiers with a modified loss that enforces fair outcomes for similar individuals. We provide bounds on the loss in fairness which constructs a certificate on the fairness of the trained NN.
    \item We applied our framework to common fairness datasets and we show that global fairness can be achieved with a minimal loss in performance.
\end{itemize}

\section{Preliminaries}
Our framework supports regression and multi-class classification models, however for simplicity, we only present our framework for binary classification models $h: \mathbb{R}^n \rightarrow  \{0,1\}$ of the form $h(x)=t(f_\theta(x))$ where $t$ is a threshold function with threshold equals to $0.5$. Moreover, we assume $f_\theta$ is an $L$-layer NN with ReLU hidden activations and parameters $\theta = \left( (W_1, b_1), \ldots, (W_L, b_L)\right)$ where $(W_i,b_i)$ denotes the weights and bias of the $i$th layer.
We also assume the activation function of the last layer of $f_\theta$ is a sigmoid function. The NN accepts an input vector $x$ where the components $x_i \in \mathbb{R}$ (the set of real numbers) $\text{ or } x_i \in\mathbb{Z}$ (the set of integer numbers). This is suitable for most of the datasets where some features of the input are numerical while others are categorical. In this paper, we are interested in the following fairness property:

\begin{definition}[Global Individual Fairness~\cite{dwork2012fairness, john2020verifying}]
A model $f_\theta(x)$ is said to satisfy the global individual fairness property $\phi$ if the following holds:
\begin{align}
\forall x,x' \in D_\phi \quad \text{s.t.} \quad d(x,x') = 1 \Longrightarrow h(f_\theta(x)) = h(f_\theta(x')), \label{eq:fairness}
\end{align}
where $d: \mathcal{R}^n \times \mathcal{R}^n \rightarrow \{1,0\}$ is a similarity metric that evaluates to $1$ when $x$ and $x'$ are similar and $D_\phi$ is the input domain of $x$ for property $\phi$ defined as $D_\phi:= D_\phi^0 \times ... \times D_\phi^{n-1}$ with $D_\phi^i:= \{x_i \; | \; l_i \leq x_i \leq u_i\}$ for some bounds $l_i$ and $u_i$.
\label{def:global_fairness}
\end{definition}

In this paper, we utilize the feature-wise similarity metric $d$ defined as:
\begin{align}
 d(x,x') = \begin{cases} 
1 & \text{if} \; |x_i - x'_i| \le \delta_i \qquad \forall i \in \{1,\ldots n\} \\
0 & \text{otherwise}
\end{cases} \label{eq:similalrity}
\end{align}
This feature-wise similarity metric allows the fairness property $\phi$ in~\eqref{eq:fairness} to capture several other fairness properties as special cases as follows:
\begin{definition}[Individual discrimination \cite{aggarwal2019}]
\label{def:counterfactual}
A model $f_\theta(x)$ is said to be nondiscriminatory between individuals if the following holds:
$$ 
\forall x = (x_s, x_{ns}),x' =(x'_s, x'_{ns}) \in D_\phi \quad \text{s.t.} \quad  x_{ns} = x'_{ns} \; \text{and} \; x_s \ne x'_{s} \Longrightarrow h(f_\theta(x)) = h(f_\theta(x')),
$$
where $x_s$ and $x_{ns}$ denotes the sensitive attributes and non-sensitive attributes of $x$, respectively.
\end{definition}
Indeed, the individual discrimination corresponds to a global individual fairness property by setting $\delta_i = 0$ in~\eqref{eq:similalrity} for the non-sensitive attributes. Another definition of fairness\cite{ruoss2020learning} states that two individuals are similar if their numerical features differ by no more than $\alpha$. Again, this can be represented by the closeness metric simply by setting $\delta_i=0$ for categorical attributes and $\delta_i = \alpha$ for numerical attributes.

Based on Definition 1 above, we can formally verify whether the fairness property $\phi$ holds by checking if the set of counterexamples (or violations) $\mathcal{C}$ is empty, where $\mathcal{C}$ is defined as:
\begin{equation}
    \mathcal{C} = \left\{(x,x') \Big\lvert x,x' \in D_\phi,\overset{n-1}{\underset{i=0}{\bigwedge}}|x_i -x_i'| < \delta_i , h(x) \neq h(x')\right\} = \emptyset\text{.}
    \label{eq:verification_pb}
\end{equation}

\section{Global Individual Fairness as a hyperproperty}
\label{hyperproperties}
In this section, we draw the connection between the verification of global individual fairness properties~\eqref{eq:fairness} and hyperproperties in the context of program verification. On the one hand, several local properties of NNs (e.g., adversarial robustness) are considered trace properties, i.e., properties defined on the input-output behavior of the NN. In this case, one can search the input space of the NN to find a \emph{single} input (or counterexample) that leads to an output that violates the property. In the domain of adversarial robustness, a counterexample corresponds to a disturbance to an input that can change the classification output of a NN. On the other hand, other properties, like the global fairness properties, can not be modeled as trace properties. This stems from the fact that one can not judge the correctness of the NN by considering individual inputs to the NN. Instead, finding a counterexample to the fairness property will entail searching over \emph{pairs} of inputs and comparing the NN outputs of these inputs. Properties that require examining pairs or sets of traces (input-outputs of a program) are defined as \emph{hyperproperties}~\cite{barthe2011relational}.

\begin{wrapfigure}{r}{0.45\textwidth}
  \vspace{-9mm}
  \centering
  \includegraphics[width=0.45\textwidth,keepaspectratio]{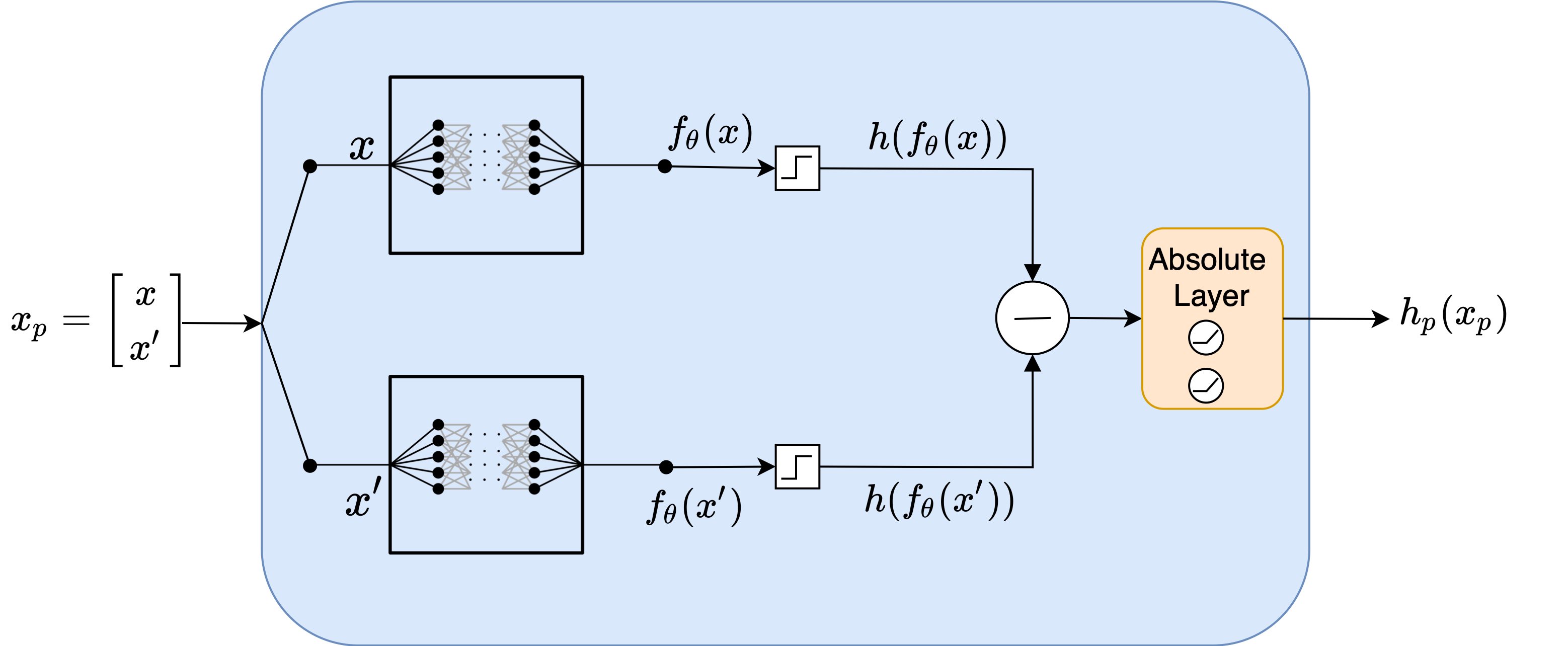}\vspace{-1mm}
  \caption{Construction of the Product NN.
  }
  \label{fig:product_net}
\end{wrapfigure}
Modeling global individual fairness as a hyperproperty leads to a direct certification framework. In particular, a key idea in the hyperproperty verification literature is the notion of a product program that allows the reduction of the hyperproperty verification problem to a standard verification problem~\cite{barthe2011relational}. A product program is constructed by composing two copies of the original program together. The main benefit is that the hyperproperties of the original program become trace properties of the product program that can be verified using standard techniques. Motivated by this observation, our framework \toolname\ generalizes the concept of \emph{product programs} into \emph{product NNs} (described in detail in Section \ref{sec:fair_ver} and shown in Figure~\ref{fig:product_net}) that accepts a pair of inputs and generates a pair of two independent outputs. We then use the product network to verify fairness (hyper)properties using standard techniques.


\section{\toolname: A Framework for Certified Fairness of Neural Networks}
\label{sec:framework}

As mentioned earlier in section \ref{hyperproperties}, the fairness property can be viewed as a hyperproperty of the NN. We propose the use of a product NN that can reduce the verification of such hyperproperty into standard trace (input/output) property. In this section, we first explain how to construct the product NN followed by how to use it to encode the fairness verification problem into ones that are accepted by off-the-shelf NN verifiers. Next, we discuss how to use this product NN to derive a fairness regualrizer that can be used during training to obtain a certified fair NN.

\subsection{Product Neural Network}
\label{sec:prdouct_NN}
Given a neural network $f_\theta$, the product network $f_{\theta_p}$ is basically a side-to-side composition of $f_\theta$ with itself. More formally, the parameters vector $\theta_p$ of the product NN is defined as:
\begin{align}
\theta_p = \left( 
\left( 
\begin{bmatrix}
 W_1 & \mathbf{0} \\
 \mathbf{0} & W_1
\end{bmatrix}
,
\begin{bmatrix}
b_1\\
b_1
\end{bmatrix}
\right),
\ldots,
\left( 
\begin{bmatrix}
 W_L & \mathbf{0} \\
 \mathbf{0} & W_L
\end{bmatrix}
,
\begin{bmatrix}
b_L\\
b_L
\end{bmatrix}
\right)
\right)
\end{align} 
where $(W_i, b_i)$ are the weights and biases of the $i$th layer of $f_\theta$. The input to the product network $f_{\theta_p}$ is a pair of concatenated inputs $x_p=(x,x')$. Finally, we add an output layer that results in an output $h_p \in \{0, 1\}$ defined as:
$ h_p(f_{\theta_p} (x_p)) = |h(f_\theta(x)) - h(f_\theta(x'))| $
where the absolute value operator $|.|$ can be implemented using ReLU nodes by noticing that $|a| = max(a,0) + max(-a,0)$. Figure \ref{fig:product_net} summarizes this construction. 


\subsection{Fairness Verification}
\label{sec:fair_ver}

Using the product network defined above, we can rewrite the set of counterexamples $\mathcal{C}$ in~\eqref{eq:verification_pb} as:
\begin{align}
    \mathcal{C}_p = \left\{x_p \Big\lvert x_p \in D_\phi \times D_\phi , \; \overset{n-1}{\underset{i=0}{\bigwedge}}|x_i -x_i'| < \delta_i, \; h_p(x_p) > 0 \right\} 
\label{eq:verification_pb_p}    
\end{align}
which corresponds to the standard verification of NN input-output properties~\cite{LiuAlgorithmsVerifyingDeep2019}, albeit being defined over the product network inputs and outputs..



To check the emptiness of the set $\mathcal{C}_p$ in~\eqref{eq:verification_pb_p} (and hence certify the global individual fairness property), we need to search the space $D_\phi \times D_\phi$ to find at least one counterexample that violates the fairness property, i.e., a pair $x_p = (x,x')$ that represent similar individuals who are classified differently by the NN. Finding such a counterexample is, in general, NP-hard due to the non-convexity of the ReLU NN $f_{\theta_p}$. To that end, we use PeregriNN \cite{khedr2021peregrinn} as our NN verifier. Briefly, PeregriNN overapproximates the highly non-convex NN  with a linear convex relaxation for each ReLU activation. This is done by introducing two optimization variables for each ReLU, a pre-activation variable $\hat{y}$ and a post-activation variable $y$. The non-convex ReLU function can then be overapproximated by a triangular region of three constraints; $y \geq 0$, $y \geq \hat{y}$, and $y \leq \frac{u}{u-l}(\hat{y} - l)$, where $l\text{,}u$ are the lower and upper bounds of $\hat{y}$ respectively. The solver tries to check whether the approximate problem has no solution or iteratively refine the NN approximation until a counterexample that violates the fairness constraint is found. PeregriNN employs other optimizations in the objective function to guide the refinement of the NN approximation but the details of these methods are beyond the scope of this paper. We refer the reader to the original paper \cite{khedr2021peregrinn} for more details on the internals of the solver. 

\begin{proposition}
Consider a NN model $f_\theta$ and a fairness property $\phi$---either representing a Global Individual Fairness property (Definition~\ref{def:global_fairness}) or an Individual Discrimination property (Definition~\ref{def:counterfactual}). Consider a set of counterexamples $\mathcal{C}_p$ computed using a NN verifier applied to the product network $f_{\theta_p}$. The NN satisfies the property $\phi$ whenever the set $\mathcal{C}_p$ is empty.
\end{proposition}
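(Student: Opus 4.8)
The plan is to establish the proposition as the conclusion of a short chain of equivalences, $\mathcal{C}_p = \emptyset \iff \mathcal{C} = \emptyset \iff \phi \text{ holds}$, so that emptiness of $\mathcal{C}_p$ certifies $\phi$. The only genuinely new object in this chain is the product network $f_{\theta_p}$, so the bulk of the argument is a correctness statement about it; the remaining links are essentially a rewriting of the definitions in Section~\ref{sec:framework}. Throughout I would treat Global Individual Fairness (Definition~\ref{def:global_fairness}) as the primary case and recover Individual Discrimination (Definition~\ref{def:counterfactual}) at the end as the instance $\delta_i = 0$ on the non-sensitive coordinates.

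First I would prove a decoupling lemma: for every input pair $x_p = (x,x')$, the product network satisfies $f_{\theta_p}(x,x') = \bigl(f_\theta(x),\, f_\theta(x')\bigr)$. This is shown by induction on the layer index $\ell = 1,\ldots,L$. The block-diagonal weight $\begin{bmatrix} W_\ell & \mathbf 0 \\ \mathbf 0 & W_\ell\end{bmatrix}$ together with the stacked bias $(b_\ell; b_\ell)$ maps a stacked pre-activation $(z; z')$ to $(W_\ell z + b_\ell;\, W_\ell z' + b_\ell)$, and the coordinatewise ReLU preserves the block structure, so no information crosses between the two copies at any layer. Hence the two halves evolve exactly as two independent instances of $f_\theta$. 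Combined with $h(f_\theta(\cdot)) = t(f_\theta(\cdot))$ and the fact that $h$ takes values in $\{0,1\}$, the added output node computes $h_p(x_p) = |h(f_\theta(x)) - h(f_\theta(x'))| \in \{0,1\}$, so $h_p(x_p) > 0$ if and only if $h(f_\theta(x)) \neq h(f_\theta(x'))$; the ReLU implementation $|a| = \max(a,0) + \max(-a,0)$ realizes this node exactly.

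With the lemma in hand, the reduction is immediate. Under the identification $(x,x') \leftrightarrow x_p$, the membership predicate defining $\mathcal{C}_p$ in~\eqref{eq:verification_pb_p} becomes, coordinate by coordinate, the membership predicate defining $\mathcal{C}$ in~\eqref{eq:verification_pb}: the domain constraint $x_p \in D_\phi \times D_\phi$ is $x, x' \in D_\phi$, the closeness conjunction is identical, and $h_p(x_p) > 0$ is $h(f_\theta(x)) \neq h(f_\theta(x'))$ by the lemma. Therefore $\mathcal{C}_p$ is exactly the image of $\mathcal{C}$, and in particular $\mathcal{C}_p = \emptyset \iff \mathcal{C} = \emptyset$. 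Finally, $\mathcal{C} = \emptyset$ is by construction the negation of ``there exists a similar pair classified differently,'' i.e.\ it is logically equivalent to the universally quantified implication~\eqref{eq:fairness} that defines $\phi$; specializing the closeness thresholds to $\delta_i = 0$ on the non-sensitive coordinates turns~\eqref{eq:fairness} into Definition~\ref{def:counterfactual}, so both fairness variants are covered simultaneously.

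I expect the point requiring most care to be the interface between the similarity predicate and the counterexample set, rather than the decoupling lemma (which is routine block-matrix bookkeeping). Specifically, Definition~\ref{def:global_fairness} declares two individuals similar when $|x_i - x'_i| \le \delta_i$ (non-strict), whereas $\mathcal{C}$ and $\mathcal{C}_p$ are written with strict inequalities $|x_i - x'_i| < \delta_i$. Since the proposition is the soundness direction (emptiness $\Rightarrow$ fairness), I must ensure the counterexample set is not smaller than the set of unfair similar pairs; a mismatch on the boundary $|x_i - x'_i| = \delta_i$ could otherwise leave a violating pair uncaptured by an empty $\mathcal{C}_p$. I would therefore reconcile the two conventions before invoking the equivalence (treating the inequalities consistently, or arguing the boundary is immaterial on the integer/categorical coordinates where $\delta_i = 0$). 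A secondary, implicit assumption is that the NN verifier is sound, so that its report of $\mathcal{C}_p = \emptyset$ faithfully reflects the mathematical emptiness used above; this is supplied by the soundness of PeregriNN cited in Section~\ref{sec:fair_ver}.
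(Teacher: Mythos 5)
Your argument is essentially the paper's own proof, which simply asserts the equivalence of $\mathcal{C}$ and $\mathcal{C}_p$ together with the soundness and completeness of the verifier; your decoupling lemma and the $h_p(x_p)>0 \iff h(f_\theta(x))\neq h(f_\theta(x'))$ step just make explicit the block-diagonal bookkeeping that the paper leaves implicit. Your observation about the strict inequality $|x_i-x'_i|<\delta_i$ in~\eqref{eq:verification_pb} versus the non-strict $|x_i-x'_i|\le\delta_i$ in~\eqref{eq:similalrity} is a real discrepancy the paper does not address, and resolving it as you propose is a worthwhile addition rather than a deviation in approach.
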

\begin{proof}
This result follows directly from the equivalence between the sets $\mathcal{C}$ in~\eqref{eq:verification_pb} and $\mathcal{C}_p$ in~\eqref{eq:verification_pb_p} along with the NN verifiers (like PeregriNN) being sound and complete and hence capable of finding any counterexample, if one exists.
\end{proof}

\subsection{Certified Fair Training}
\label{subsec:fair_training}
In this section, we formalize a \emph{fairness regularizer} that can be used to train certified fair models. In particular, we propose two fairness regularizers that correspond to \emph{local} and \emph{global} individual fairness. We provide the formal definitions of both these regularizers below and their characteristics.

\paragraph{Local Fairness Regularizer $\mathcal{L}_f^l$ using Robustness around Training Data:}
Our first proposed regularizer is motivated by the \emph{robustness} regularizers used in the literature of certified robustness~\cite{wong2017provable,zhang2019towards}. The regualrizer, denoted by $\mathcal{L}_f^l$, aims to minimize the average loss in fairness across all training data. More formally, given a training point $(x,y)$ and NN parameters $\theta$, let $\mathcal{L}(f_\theta(x), y; \theta) = -[y \log(f_\theta(x)) + (1-y) \log(1-f_\theta(x))]$ be the standard binary cross-entropy loss. The fairness regularizer $\mathcal{L}_f^l$ can then be defined as:
\begin{align}
\mathcal{L}_f^l(\theta) &= 
\mathbb{E}_{(x,y) \in(X,Y)} \left[ \max_{ \substack{x'\in D_\phi \\ d(x,x') = 1 } } \mathcal{L}(f_{\theta}(x'),y;\theta) \right] 
\label{eq:reg_local_1}
\end{align}
In other words, the regularizer above aims to measure the expected value (across the training data) for the worst-case loss of fairness due to points $x'$ that are assigned to different classes. Indeed, the regualrizer~\eqref{eq:reg_local_1} is not differentiable (with respect to the weights $\theta$) due to the existence of the max operator. Nevertheless, one can compute an upper bound of~\eqref{eq:reg_local_1} and aims to minimize this upper bound instead. Such upper bound can be derived as follows:
\begin{align}
\max_{ \substack{x'\in D_\phi \\ d(x,x') = 1 } } \!\!\! \mathcal{L}(f_\theta(x'),y;\theta) 
&= \!\!\!\!\!
\max_{ \substack{x'\in D_\phi \\ d(x,x') = 1 } } \!\!\!
\begin{cases}
    -\log(1 - f_{\theta}(x')) & \text{if} \; y \!=\! 0 \\
    -\log(f_\theta(x')) & \text{if} \; y \!=\! 1 \\
    \end{cases} \le 
\begin{cases}
    -\log(1 - \theta^T \overline{w}_{S_\phi(x)}) & \text{if} \; y \!=\! 0 \\
    -\log(\theta^T \underline{w}_{S_\phi(x)}) & \text{if} \; y \!=\! 1 \\
    \end{cases}
    \label{eq:reg_local_2}
\end{align}
where $\theta^T \overline{w}_{S_\phi(x)}$ and $\theta^T \underline{w}_{S_\phi}$ are the linear upper/lower bound of $f_{\theta}(x')$ inside the set $S_\phi(x) = \{x'\in D_\phi | d(x,x') = 1\}$. Such linear upper/lower bound of $f_{\theta}(x')$ can be computed using off-the-shelf bounding techniques like Symbolic Interval Analysis~\cite{wang2018efficient} and $\alpha$-Crown~\cite{xu2020fast}. We denote by $\overline{\mathcal{L}}(y; \theta)$ the right hand side of the inequality in~\eqref{eq:reg_local_2} which depends only on the label $y$ and the NN parameters $\theta$. Now the fairness property can be incorporated in training by optimizing the following problem over $\theta$ (the NN parameters):
\begin{equation}
    \min_{\theta} \mathop{\mathbb{E}}_{(x,y) \in (X,y)} \left[(1-\lambda_f)\underbrace{\mathcal{L}(f_\theta(x),y;\theta)}_{\text{natural loss}} + 
    \lambda_f \underbrace{\overline{\mathcal{L}}(y; \theta)}_{\text{local fairness loss}} \right]\text{,}
    \label{train_loss}
\end{equation}
where $\lambda_f$ is a regularization parameter to control the trade-off between fairness and accuracy.

Although easy to compute and incorporate in training, the regularizer $\mathcal{L}_f^l(\theta)$ (and its upper bound) defined above suffers from a significant drawback. It focuses on the fairness \emph{around} the samples presented in the training data. In other words, although the aim was to promote \emph{global} fairness, this regularizer is effectively penalizing the training only in the \emph{local} neighborhood of the training data. Therefore, its effectiveness depends greatly on the quality of the training data and its distribution. Poor data distribution may lead to the poor effect of this regularizer. Next, we introduce another regularizer that avoids such problems.

\paragraph{Global Fairness Regularizer $\mathcal{L}_f^g$ using Product Network:}
To avoid the dependency on data, we introduce a novel fairness regularizer capable of capturing global fairness during the training. Such a regularizer is made possible thanks to the product NN defined above. In particular, the global fairness regularizer $\mathcal{L}_f^g(\theta)$ is defined as:
\begin{align}
    \mathcal{L}_f^g(\theta)
    &= \max_{ \substack{(x,x') \in D_{\phi} \times D_{\phi} \\ d(x,x') = 1}} |f_\theta(x) - f_\theta(x')| 
    \label{eq:reg_global_1}
\end{align}
In other words, the regualizer $\mathcal{L}_f^g(\theta)$ in~\eqref{eq:reg_global_1} aims to penalize the worst case loss in global fairness. Similar to~\eqref{eq:reg_local_1}, the $\mathcal{L}_f^g(\theta)$ is also non-differentiable with respect to $\theta$. Nevertheless, and thanks to the product NN, we can upper bound $\mathcal{L}_f^g(\theta)$ as:
\begin{align}
    \mathcal{L}_f^g(\theta) 
    \le  \max_{(x,x') \in D_{\phi} \times D_{\phi}} |f_\theta(x) - f_\theta(x')| 
    = \max_{ \substack{x_p \in D_{\phi} \times D_{\phi}}} f_{p}(x_p) \le \theta^T \overline{w}_{D_\phi}
    \label{eq:reg_global_2}
\end{align}
where $\theta^T \overline{w}_{D_\phi}$ is the linear upper bound of the product network among the domain $D_\phi \times D_\phi$. Again, such bound can be computed using Symbolic Interval Analysis and $\alpha$-Crown on the product network after replacing the output $h_p$ with $f_p = |f_\theta(x) - f_\theta(x')|$. It is crucial to note that the upper bound in~\eqref{eq:reg_global_2} depends only on the domain $D_\phi$. Hence, the fairness property can now be incorporated into training by minimizing this upper bound as:
\begin{equation}
    \min_{\theta} \mathop{\mathbb{E}}_{(x,y) \in (X,y)} \left[(1-\lambda_f)\underbrace{\mathcal{L}(f_\theta(x),y;\theta)}_{\text{natural loss}} \right] + 
    \lambda_f
    \underbrace{\theta^T \overline{w}_{D_\phi}}_{\text{global fairness loss}} \text{,}
    \label{train_loss}
\end{equation}
where the fairness loss is now outside the $\mathbb{E}[\;.\;]$ operator.

In the next section, we show that the global fairness regularizer $\mathcal{L}_f^g(\theta)$ empirically outperforms the local fairness regularizer $\mathcal{L}_f^l(\theta)$. We end up our discussion in this section with the following result:
\begin{proposition}
Consider a NN model $f_\theta$ and a fairness property $\phi$---either representing a Global Individual Fairness property (Definition~\ref{def:global_fairness}) or an Individual Discrimination property (Definition~\ref{def:counterfactual}). Consider a NN model $f_\theta$ trained using the objective function in~\eqref{train_loss}. If $\theta^T \overline{w}_{D_\phi} = 0$ by the end of the training, then the resulting $f_\theta$ is guaranteed to satisfy $\phi$. 
\end{proposition}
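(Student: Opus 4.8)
The plan is to exploit the chain of inequalities already established in~\eqref{eq:reg_global_2}, which sandwiches the global fairness regularizer $\mathcal{L}_f^g(\theta)$ between a nonnegative quantity and the linear upper bound $\theta^T \overline{w}_{D_\phi}$. The whole argument is essentially a squeeze: once the hypothesis drives the upper bound to zero, every quantity sitting above it in the chain must collapse to zero, and the remaining work is to translate this numerical collapse into the logical statement of the fairness property $\phi$.

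First I would record that the regularizer $\mathcal{L}_f^g(\theta)$ defined in~\eqref{eq:reg_global_1} is nonnegative, since it is a maximum of absolute values taken over a feasible set that is nonempty: the pair $x = x'$ satisfies $d(x,x') = 1$ (as $|x_i - x_i'| = 0 \le \delta_i$) and contributes the value $0$. Combining this observation with the hypothesis $\theta^T \overline{w}_{D_\phi} = 0$ and the inequalities in~\eqref{eq:reg_global_2}, I obtain $0 \le \mathcal{L}_f^g(\theta) \le \theta^T \overline{w}_{D_\phi} = 0$, and therefore $\mathcal{L}_f^g(\theta) = 0$.

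Next I would unpack what $\mathcal{L}_f^g(\theta) = 0$ means. Because $\mathcal{L}_f^g(\theta)$ is the maximum of the nonnegative quantity $|f_\theta(x) - f_\theta(x')|$ over all similar pairs $(x,x') \in D_\phi \times D_\phi$ with $d(x,x') = 1$, a zero maximum forces $|f_\theta(x) - f_\theta(x')| = 0$, i.e. $f_\theta(x) = f_\theta(x')$, for every such similar pair. Finally, since the classification decision is a deterministic function of the network output (applying the threshold $t$ at $0.5$), equal outputs $f_\theta(x) = f_\theta(x')$ yield equal decisions $h(f_\theta(x)) = h(f_\theta(x'))$. This is precisely the conclusion required by~\eqref{eq:fairness}, establishing the Global Individual Fairness property of Definition~\ref{def:global_fairness}. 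The Individual Discrimination property of Definition~\ref{def:counterfactual} then follows as a special case, since its set of constrained pairs (those agreeing on the non-sensitive attributes) is a subset of the similar pairs already handled, so fairness over all similar pairs implies it \emph{a fortiori}.

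I expect the only delicate point to be the justification that the relaxation used in~\eqref{eq:reg_global_2} --- dropping the similarity constraint $d(x,x') = 1$ and bounding the product network $f_p$ over all of $D_\phi \times D_\phi$ --- is a valid \emph{upper} bound on $\mathcal{L}_f^g(\theta)$. This holds because enlarging the feasible set can only increase a maximum, and the soundness of the Symbolic Interval Analysis / $\alpha$-Crown bounding procedure guarantees $\max_{x_p} f_p(x_p) \le \theta^T \overline{w}_{D_\phi}$. Worth flagging, though not an obstacle to the proof, is that a zero bound in fact certifies the stronger (and practically degenerate) statement that $f_\theta$ is constant on $D_\phi$; the proposition asserts only the weaker fairness consequence, which is all that is needed here.
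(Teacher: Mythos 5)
Your proof is correct and takes the same route as the paper, which simply cites the chain of inequalities in~\eqref{eq:reg_global_2}; you fill in the squeeze argument (nonnegativity of $\mathcal{L}_f^g$, hence $\mathcal{L}_f^g(\theta)=0$, hence equal outputs and equal thresholded decisions on all similar pairs) that the paper leaves implicit. Your closing remark that a zero bound actually certifies the stronger, degenerate statement that $f_\theta$ is constant on $D_\phi$ is a valid and worthwhile observation, but it does not change the argument.
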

\begin{proof}
The result follows directly from equation~\eqref{eq:reg_global_2}. 
\end{proof}
Indeed, the result above is just a sufficient condition. In other words, the NN may still satisfy the fairness property $\phi$ even if $\theta^T \overline{w}_{D_\phi} > 0$. Such cases can be handled by applying the verification procedure in Section~\ref{sec:fair_ver} after training the NN.

\section{Experimental evaluation}
\label{sec:exp}

We present an experimental evaluation to study the effect of our proposed fairness regularizers and hyperparameters on the global fairness.  We evaluated \toolname\ on four widely investigated fairness datasets (Adult \cite{Dua2019}, German \cite{Dua2019}, Compas \cite{Angwin2013}, and Law School \cite{wightman1998lsac}). All datasets were pre-processed such that any missing rows or columns were dropped, features were scaled so that they're between $[0,1]$, and categorical features were one-hot encoded.

\textbf{Implementation:} We implemented our framework in a Python tool called \toolname\ that can be used for training and verification of NNs against an individual fairness property. \toolname\ uses Pytorch for all NN training and a publicly available implementation of PeregriNN \cite{khedr2021peregrinn} as a NN verifier. We run all our experiments using a single GeForce RTX 2080 Ti GPU and two 24-core Intel(R) Xeon(R) CPU E5-2650 v4 @ 2.20GHz (only 8 cores were used for these experiments). 


\begin{wraptable}[14]{r}{0.3\textwidth}
    \vspace{-5mm}
    \caption{Comparison between verifying local and global fairness properties on the Adult dataset for different similarity constraint (distance $\delta_i$).}
    \centering
    \resizebox{0.3\textwidth}{!}{%
    \vspace{-5mm}
    \begin{tabular}{c|cc}
        \toprule
        $\delta_i$ & 
        Certified & Certified\\ 
        & local & global \\
        & fairness & fairness \\
        & (\%) & (\%) \\ \hline
        0.02	&89.25	&6.56  \\
        0.03	&100.00	&65.98  \\
        0.05	&81.42	&6.40  \\
        0.07	&100.00    &57.39 \\
        0.1  & 99.95  & 66.35\\
        \bottomrule
    \end{tabular}
    }
    \label{tab:exp1}
\end{wraptable}

\textbf{Measuring global fairness using verification:} While the verifier (described in Section~\ref{sec:fair_ver}) is capable of finding concrete counterexamples that violate the fairness, it is also important to \emph{quantify} a bound on the fairness. In these experiments, the certified global fairness is quantified as the percentage of partitions of the input space with zero counterexamples. In particular, the input space is partitioned using the categorical features, i.e., the number of partitions is equal to the number of different categorical assignments and each partition corresponds to one categorical assignment. Note that the numerical features don't have to be fixed inside each partition (property dependant). To verify the property globally, we run the verifier on each partition of the input space and verify the fairness property. Finally, we count the number of verified fair partitions (normalized by the total number of partitions). 

\textbf{Fairness properties:} In the experimental evaluation, we consider two classes of fairness properties. The first class $\mathcal{P}_1$ is the one in definition \ref{def:counterfactual} where two individuals are similar if they only differ in their sensitive attribute. The second class of properties $\mathcal{P}_2$ relaxes the first by also allowing numerical attributes to be close (not identical), this is allowed under definition \ref{def:global_fairness} of global individual fairness by setting $\delta_i >0$ for numerical attributes. 


\subsection{Experiment 1: Global Individual Fairness vs. Local Individual Fairness}
In this experiment, we empirically show that NNs with high \emph{local} individual fairness does not necessarily result into NNs with \emph{global} individual fairness. In particular, we train a NN on the Adult dataset and considered multiple fairness properties (all from class $\mathcal{P}_2$ defined above) by varying $\delta_i$. Note that $\delta_i$ is equal for all features $i$ within the same property, but is different from one property to another. Next, we use PeregeriNN verifier to find counterexamples for both the \emph{local} fairness (by applying the verifier to the trained NN) and the \emph{global} fairness (by applying the verifier to the product NN). We measure the fairness of the NN for both cases and report the results in Table~\ref{tab:exp1}. The results indicate that verifying \emph{local} fairness may result in incorrect conclusions about the fairness of the model. In particular, rows1-4 in the table shows that counterexamples were not found in the neighborhood of the training data (reflected by the $100\%$ certified local fairness), yet verifying the product NN was capable of finding counterexamples that are far from the training data leading to accurate conclusions about the NN fairness.


\begin{figure}[!t]
    \centering
    \begin{subfigure}{0.49\textwidth}
    \includegraphics[width=\textwidth,scale=0.5]{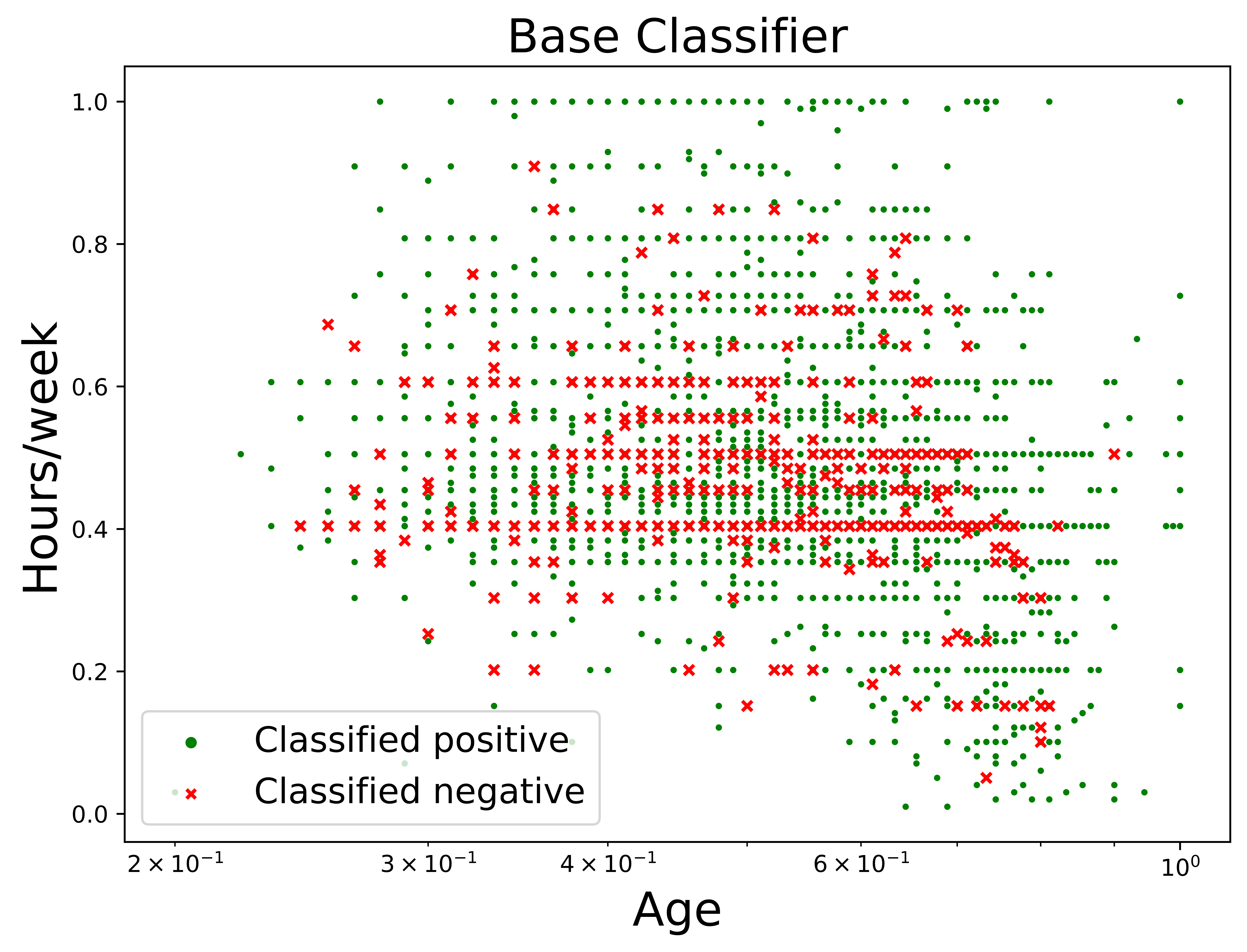}
    \end{subfigure}
    \begin{subfigure}{0.49\textwidth}
    \includegraphics[width=\textwidth,scale=0.5]{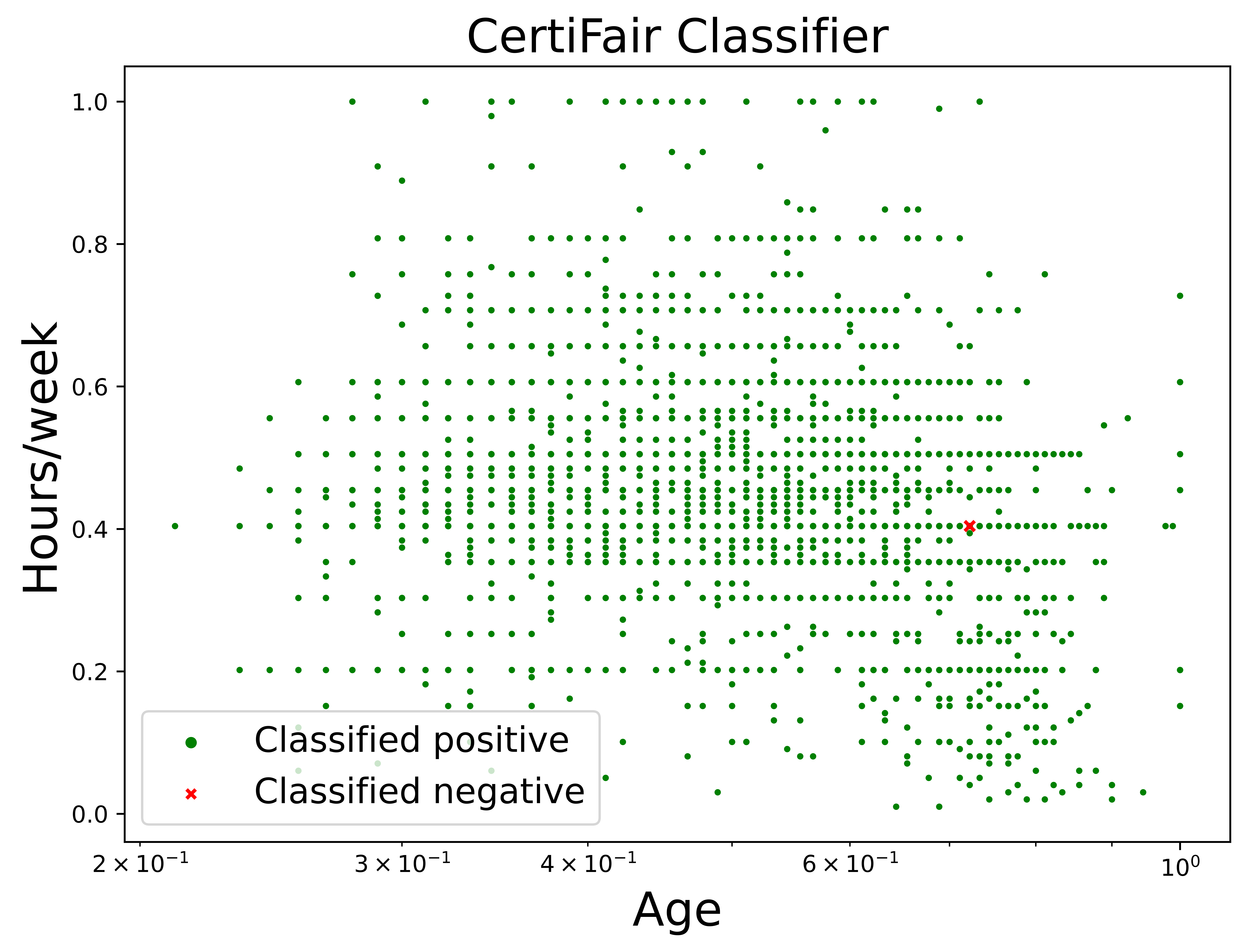}
    \end{subfigure}
    \vspace{-3mm}
    \caption{Comparison between the base and \toolname\ classifiers in terms of fairness as defined in \ref{def:counterfactual}. We show the classifications for the minority group of the adult dataset projected on two features; Age, and hours worked per week. The figure shows that the base classifier suffers from biases against identical individuals who are of different race (red markers). \toolname\ is able to drastically improve the individual fairness on this dataset with only $2\%$ reduction in accuracy.}
    \label{fig:discrimination} \vspace{-5mm}
\end{figure}

\begin{figure}[!t]
    \centering
    
    \includegraphics[width=\textwidth]{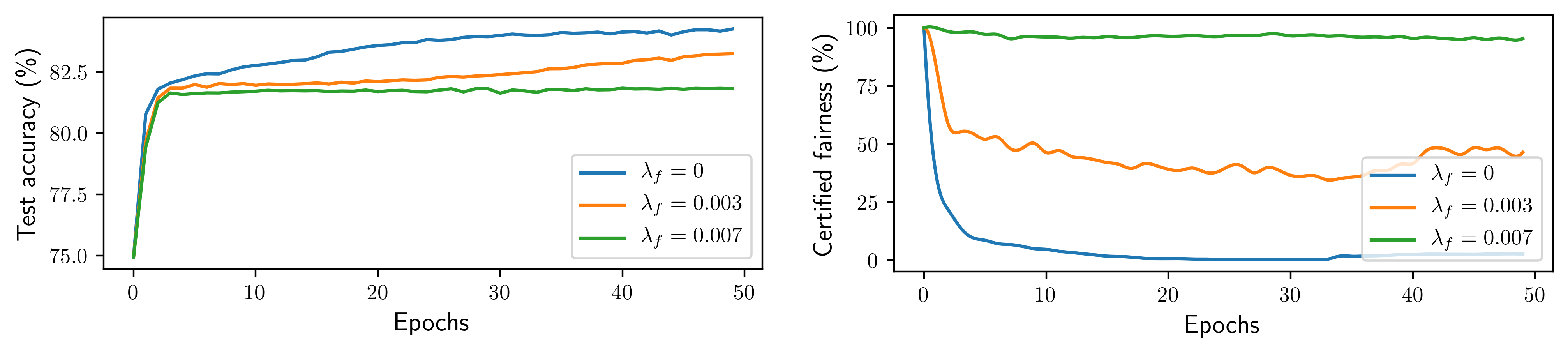} \vspace{-7mm}
    \caption{Test accuracy (left) and certified fairness (right) across training epochs when training a NN with the fairness constraint ($\lambda_f = 0.007$) and without it ($\lambda_f =0)$ on the Adult dataset.}
    \label{fig:training_curves} \vspace{-5mm}
\end{figure}

\subsection{Experiment 2: Effects of Incorporating the Fairness Regularizer 
}
We investigate the effect of using the global fairness regularizer (defined in~\eqref{eq:reg_global_2}) on the decisions of the NN classifier when trained on the Adult dataset. The fairness property for this experiment is of class $\mathcal{P}_1$. To investigate the predictor's bias, we first project the data points on two numerical features (age and hours/week). Our objective is to check whether the points that are classified positively for the privileged group are also classified positively for the non privileged group. Figure~\ref{fig:discrimination} (left) shows the predictions for the unprivileged group when using the base classifier ($\lambda_f = 0$). Green markers indicate points for which individuals from both privileged and non privileged groups are treated equally while red markers show individuals from the non privileged group that did not receive the same NN output compared to the corresponding identical ones in the privileged group. Figure \ref{fig:discrimination} (right) shows the same predictions but using the fair classifier ($\lambda_f = 0.03$), the predictions from this classifier drastically decreased the discrimination between the two groups while only decreasing the accuracy by 2\%. These results suggest that we can indeed regularize the training to improve the satisfaction of the fairness constraint without a drastic change in performance.


We also investigate how the certified fairness changes across epochs of training. To that end, we train a NN for the German dataset and 
evaluate the test accuracy as well as the certified global fairness after each epoch of training for two different values of $\lambda_f$. Figure \ref{fig:training_curves} shows the underlying trade-off between achieving fairness versus maximizing accuracy. As expected, lower values of $\lambda_f$ results in lower loss in accuracy (compared to the base case with $\lambda_f = 0$) while having lower effect on the fairness. The results also show that a small sacrifice of the accuracy can lead to significant enhancement of the fairness as shown for the $\lambda_f = 0.007$ case.

\subsection{Experiment 3: Certified Fair Training}
\label{subsec:exp3}

\textbf{Experiment setup:} 
The objective of this experiment is to compare the two regularizers, the local fairness regularizer in~\eqref{eq:reg_local_2} and the global fairness regularizer in~\eqref{eq:reg_global_2}. To that end, we performed a grid search over the learning rate $\alpha$, the fairness regularization parameter $\lambda_f$, and the NN architecture to get the best test accuracy across all datasets. Best performance was obtained with a NN that consists of two hidden layers of 20 neurons (except for the German dataset, where we use 30 neurons per layer), learning rate $\alpha=0.001$, global fairness regularization parameter $\lambda_f$ equal to 0.01 for Adult and Law School, 0.005 for German, and 0.1 for Compas dataset, and local fairness regularization parameter $\lambda_f$ equal to 0.95 for Adult, 0.9 for Compas, 0.2 for German, and 0.5 for Law School. We trained the models for 50 epochs with a batch size of 256 (except for Law School, where the batch size is set to 1024) and used Adam Optimizer for learning the NN parameters $\theta$. All the datasets were split into $70\%$ and $30\%$ for training and testing, respectively.

\begin{table}[]
\caption{Comparison between a base classifier ($\lambda_f=0$) and \toolname\ classifier with different fairness regularizers}
\label{tab:exp3}
\resizebox{\textwidth}{!}{%
\begin{tabular}{cllllllllll}
\toprule
Constraint &
  \multicolumn{1}{c}{Dataset} &
  \multicolumn{3}{c}{Test Accuracy (\%)} &
  \multicolumn{3}{c}{Positivity Rate(\%)} &
  \multicolumn{3}{c}{\begin{tabular}[c]{@{}c@{}}Certified \\ Global Fairness (\%)\end{tabular}} \\
\multicolumn{2}{l}{} &
  Base &
  \multicolumn{1}{c}{\begin{tabular}[c]{@{}c@{}}\ $\mathcal{L}_f^g$\end{tabular}} &
  \multicolumn{1}{c}{\begin{tabular}[c]{@{}c@{}}\ $\mathcal{L}_f^l$\end{tabular}} &
  \multicolumn{1}{c}{Base} &
  \multicolumn{1}{c}{\begin{tabular}[c]{@{}c@{}}\ $\mathcal{L}_f^g$\end{tabular}} &
  \multicolumn{1}{c}{\begin{tabular}[c]{@{}c@{}}\ $\mathcal{L}_f^l$\end{tabular}} &
  \multicolumn{1}{c}{Base} &
  \multicolumn{1}{c}{\begin{tabular}[c]{@{}c@{}}\ $\mathcal{L}_f^g$\end{tabular}} &
  \multicolumn{1}{c}{\begin{tabular}[c]{@{}c@{}}\ $\mathcal{L}_f^l$\end{tabular}} \\ \hline
\multirow{4}{*}{
$\mathcal{P}_2$} &
  Adult &
  84.55 &
  82.34 &
  83.81 & 
  20.8 &
  17.4 &
  21.79&
  6.40 &
  100.00 &
  61.92
   \\
 &
  German & 75.30	&73.00	& 70.00  &79.00	&72.00	& 83.00  &8.64	&95.06 &86.41  
   \\
 &
  Compas & 68.30  & 65.08 & 63.19  & 61.55 & 66.00      &69.40 & 47.22 & 100.00 & 100.00
   \\
 &
  Law & 87.60  & 79.92 & 78.69 & 21.51 & 9.10   & 25.03 & 6.87  & 51.87 & 78.54 
   \\ \hline
\multirow{4}{*}{
$\mathcal{P}_1$
} &
  Adult & 84.55 & 82.33 &  83.25 & 20.80  & 18.13 &  20.15   &    1.77   & 97.86& 95.31
   \\
 &
  German &
  75.30 &  72.66  &  69.66 & 79.00    & 83.14  & 81.71  & 14.81 & 92.59 &82.71
   \\
 &
  Compas &68.30  & 65.08 &  63.82 & 61.55 & 66.00    &   71.60    & 47.22 & 100.00 &100.00
   \\
 &
  Law & 87.60	&84.90	&86.69   &21.42	&17.50	&  21.63 &34.16	&70.10 & 86.45 \\ \bottomrule
  
\end{tabular}%
}
\end{table}

\textbf{Effect of the choice of the fairness regularizer:}
We investigate the certified global fairness for the two regularizers introduced in Section \ref{subsec:fair_training}. Table \ref{tab:exp3} summarizes the results for $\mathcal{P}_1$ and $\mathcal{P}_2$ fairness properties across different datasets. For each property and dataset, we compare the test accuracy, positivity rate (percentage of points classified as 1), and the certified global fairness of the base classifier (trained with $\lambda_f = 0$) and the \toolname\ classifier trained twice with two different fairness regularizers $\mathcal{L}_f^l$ and $\mathcal{L}_f^g$. Compared to the base classifier, training the NN with the global fairness regularizer $\mathcal{L}_f^g$ significantly increases the certified global fairness with a small drop in the accuracy in most of the cases except for the Law School dataset, where the test accuracy dropped by 7 \% on $\mathcal{P}_2$ but the global fairness increased by 55 \%. Compared to the local regularizer $\mathcal{L}_f^l$, the global regularizer achieves higher global fairness and comparable (if not better) test accuracy on all datasets except Law School. We think that this might be due to the network's limited capacity to optimize both objectives. We also report the positivity rate (number of data points classified positively) for the classifiers. This metric is important because most of these datasets are unbalanced, and hence the classifiers can trivially skew all the classifications to a single label and achieve high fairness percentage. Thus it is desired that the positivity rate of the \toolname\ classifier to be close to the one of the base classifier to ensure that it is not trivial. Lastly, we conclude that even though the local regularizer improves the global fairness, the global regularizer can achieve higher degrees of certified global fairness without a significant decrease in test accuracy, and of course, it avoids the drawbacks of the local regularizer discussed in Section \ref{subsec:fair_training}.

\textbf{Effect of the fairness regularization parameter:}
In this experiment, we investigate the effect of the fairness regularization parameter $\lambda_f$ on the classifier's accuracy and fairness. The parameter $\lambda_f$ controls the trade-off between the accuracy of the classifier and its fairness, and tuning this parameter is usually dependent on the network/dataset. To that end, we trained a two-layer NN with 30 neurons per layer for the German dataset using 8 different values for $\lambda_f$ and summarized the results in Table \ref{tab:exp2}. The fairness property verified is of class $\mathcal{P}_2$. The results show that the global fairness satisfaction can increase without a significant drop in accuracy up to a certain point, after which the fairness loss is dominant and results in a significant decrease in the classifier's accuracy.

\begin{table}[]
\caption{Effect of fairness regularization parameter $\lambda_f$ on the test accuracy and certified fairness.}
\label{tab:exp2}
\resizebox{\textwidth}{!}{%
\begin{tabular}{cllllllll}
\toprule
$\lambda_f$   & $1 \times 10^{-4}$ & $5\times 10^{-4}$ & $7\times 10^{-4}$ & $5\times 10^{-3}$ & $7\times 10^{-3}$ & $1 \times 10^{-2}$  & $2\times 10^{-2}$  & $5\times 10^{-2}$  \\ \hline
Test accuracy (\%)                 & 74.33    & 74.33    & 75.00       & 72.33    & 72.66    & 73.00    & 72.6  & 66.33 \\
Certified global fairness  (\%)  & 8.64     & 14.81    & 13.58    & 66.66    & 82.71    & 95.06 & 98.76 & 100  \\ \bottomrule
\end{tabular}%
}
\end{table}

\section{Related work}

\textbf{Group fairness:} Group fairness considers notions like demographic parity \cite{feldman2015certifying}, equality of odds, and equality of opportunity \cite{hardt2016equality}. Tools that verify notions of group fairness assume knowledge of a probabilistic model of the population. FairSquare \cite{aws2017fairsquare} relies on numerical integration to formally verify notions of group fairness; however, it does not scale well for NNs. VeriFair \cite{bastani2019probabilistic} considers probabilistic verification using sampling and provides soundness guarantees using concentration inequalities. This approach is scalable to big networks, but it does not provide worst-case proof. 

\textbf{Individual fairness:} More related to our work is the verification of individual fairness properties. LCIFR \cite{ruoss2020learning} proposes a technique to learn fair representations that are provably certified for a given individual. An encoder is \emph{empirically} trained to map similar individuals to be within the neighborhood of the given individual and then apply NN verification techniques to this neighborhood to certify fairness. The property verified is a \emph{local} property with respect to the given individual. On the contrary, our work focuses on the global fairness properties of a NN. It also avoids the empirical training of similarity maps to avoid affecting the soundness and completeness of the proposed framework. In the context of individual global fairness, a recent work \cite{john2020verifying} proposed a sound but incomplete verifier for linear and kernelized polynomial/radial basis function classifiers. It also proposed a meta-algorithm for the global individual fairness verification problem; however, it is not clear how it can be used to design sound and complete NN verifiers for the fairness properties. 
Another line of work \cite{urban2020perfectly} focuses on proving \textit{dependency} fairness properties which is a more restrictive definition of fairness since it requires the NN outputs to avoid any dependence on the sensitive attributes. The method employs forward and backward static analysis and input space partitioning to verify the fairness property. As mentioned, this definition of fairness is different from the individual fairness we are considering in this work and is more restrictive.

\textbf{NN verification:} This work is algorithmically related to NN verification in the context of adversarial robustness. However, adversarial robustness is a local property of the network given a nominal input, and a norm bounded perturbation. Moreover, the robustness property does not consider the notion of sensitive attributes. The NN verification literature is extensive, and the approaches can be grouped into three main groups: (i) SMT-based methods, which encode the problem into a 
Satisfiability Modulo Theory problem ~\cite{katz2019marabou, 
KatzReluplexEfficientSMT2017a, ehlers2017formal}; (ii) MILP-based solvers, 
which solves the verification problem exactly by encoding it as a Mixed Integer Linear 
Program ~\cite{lomuscio2017approach, tjeng2017evaluating, bastani2016measuring, bunel2020branch, fischetti2018deep, anderson2020strong, cheng2017maximum}; (iii) Reachability based methods ~\cite{xiang2017reachable, xiang2018output, gehr2018ai2, wang2018formal, tran2020nnv, ivanov2019verisig, fazlyab2019efficient}, which perform layer-by-layer reachability analysis to compute a reachable set that can be verified against the property; and (iv) convex 
relaxations methods ~\cite{wang2018efficient, dvijotham2018dual, wong2017provable,henriksen2021deepsplit,khedr2021peregrinn,wang2021beta}. Generally, (i), (ii), and (iii) do not scale well to large networks. On the other hand, convex relaxation methods use a branch and bound approach to refine the abstraction.


\section{Discussion:}
\label{sec:discussion}
\paragraph{On the contention between Group and Individual fairness:} 
Group fairness is the requirement that different groups should be treated similarly regardless of individual merits. It is often thought of as a contradictory requirement to individual fairness. However, this has been an issue of debate \cite{binns2020apparent}. Thus, it's not clear how our framework for training might affect the group fairness requirement and is left for further investigation.

\paragraph{Can fairness be achieved by dropping sensitive attributes from data?} Fairness through unawareness is the process of learning a predictor that does not explicitly use sensitive attributes in the prediction process. However, dropping the sensitive attributes is not sufficient to remove discrimination as it can be highly predictable implicitly from other features. It has been shown ~\cite{bonilla2006racism, taslitz2007racial, pedreshi2008,amazon} that discrimination highly occurs in different systems such as housing, criminal justice, and education that do not explicitly depend on sensitive attributes in their predictions.



\bibliographystyle{ieeetr}
\bibliography{mybib}

\begin{thebibliography}{10}

\bibitem{dwork2012fairness}
C.~Dwork, M.~Hardt, T.~Pitassi, O.~Reingold, and R.~Zemel, ``Fairness through
  awareness,'' in {\em Proceedings of the 3rd innovations in theoretical
  computer science conference}, pp.~214--226, 2012.

\bibitem{brennan2009evaluating}
T.~Brennan, W.~Dieterich, and B.~Ehret, ``Evaluating the predictive validity of
  the compas risk and needs assessment system,'' {\em Criminal Justice and
  behavior}, vol.~36, no.~1, pp.~21--40, 2009.

\bibitem{Dua2019}
D.~Dua and C.~Graff, ``{UCI} machine learning repository,'' 2017.

\bibitem{mehrabi2021survey}
N.~Mehrabi, F.~Morstatter, N.~Saxena, K.~Lerman, and A.~Galstyan, ``A survey on
  bias and fairness in machine learning,'' {\em ACM Computing Surveys (CSUR)},
  vol.~54, no.~6, pp.~1--35, 2021.

\bibitem{zhang2018mitigating}
B.~H. Zhang, B.~Lemoine, and M.~Mitchell, ``Mitigating unwanted biases with
  adversarial learning,'' in {\em Proceedings of the 2018 AAAI/ACM Conference
  on AI, Ethics, and Society}, pp.~335--340, 2018.

\bibitem{xu2018fairgan}
D.~Xu, S.~Yuan, L.~Zhang, and X.~Wu, ``Fairgan: Fairness-aware generative
  adversarial networks,'' in {\em 2018 IEEE International Conference on Big
  Data (Big Data)}, pp.~570--575, IEEE, 2018.

\bibitem{mehrabi2021attributing}
N.~Mehrabi, U.~Gupta, F.~Morstatter, G.~V. Steeg, and A.~Galstyan,
  ``Attributing fair decisions with attention interventions,'' {\em arXiv
  preprint arXiv:2109.03952}, 2021.

\bibitem{yurochkin2019training}
M.~Yurochkin, A.~Bower, and Y.~Sun, ``Training individually fair ml models with
  sensitive subspace robustness,'' {\em arXiv preprint arXiv:1907.00020}, 2019.

\bibitem{ruoss2020learning}
A.~Ruoss, M.~Balunovic, M.~Fischer, and M.~Vechev, ``Learning certified
  individually fair representations,'' {\em Advances in Neural Information
  Processing Systems}, vol.~33, pp.~7584--7596, 2020.

\bibitem{barthe2011relational}
G.~Barthe, J.~M. Crespo, and C.~Kunz, ``Relational verification using product
  programs,'' in {\em International Symposium on Formal Methods}, pp.~200--214,
  Springer, 2011.

\bibitem{john2020verifying}
P.~G. John, D.~Vijaykeerthy, and D.~Saha, ``Verifying individual fairness in
  machine learning models,'' in {\em Conference on Uncertainty in Artificial
  Intelligence}, pp.~749--758, PMLR, 2020.

\bibitem{aggarwal2019}
A.~Aggarwal, P.~Lohia, S.~Nagar, K.~Dey, and D.~Saha, ``Black box fairness
  testing of machine learning models,'' in {\em Proceedings of the 2019 27th
  ACM Joint Meeting on European Software Engineering Conference and Symposium
  on the Foundations of Software Engineering}, ESEC/FSE 2019, (New York, NY,
  USA), p.~625–635, Association for Computing Machinery, 2019.

\bibitem{LiuAlgorithmsVerifyingDeep2019}
C.~Liu, T.~Arnon, C.~Lazarus, C.~Barrett, and M.~J. Kochenderfer, ``Algorithms
  for {{Verifying Deep Neural Networks}},'' 2019.

\bibitem{khedr2021peregrinn}
H.~Khedr, J.~Ferlez, and Y.~Shoukry, ``Peregrinn: Penalized-relaxation greedy
  neural network verifier,'' in {\em Computer Aided Verification} (A.~Silva and
  K.~R.~M. Leino, eds.), (Cham), pp.~287--300, Springer International
  Publishing, 2021.

\bibitem{wong2017provable}
E.~Wong and J.~Z. Kolter, ``Provable defenses against adversarial examples via
  the convex outer adversarial polytope,'' 2017.

\bibitem{zhang2019towards}
H.~Zhang, H.~Chen, C.~Xiao, S.~Gowal, R.~Stanforth, B.~Li, D.~Boning, and C.-J.
  Hsieh, ``Towards stable and efficient training of verifiably robust neural
  networks,'' {\em arXiv preprint arXiv:1906.06316}, 2019.

\bibitem{wang2018efficient}
S.~Wang, K.~Pei, J.~Whitehouse, J.~Yang, and S.~Jana, ``Efficient formal safety
  analysis of neural networks,'' in {\em Advances in Neural Information
  Processing Systems} (S.~Bengio, H.~Wallach, H.~Larochelle, K.~Grauman,
  N.~Cesa-Bianchi, and R.~Garnett, eds.), vol.~31, pp.~6367--6377, 2018.

\bibitem{xu2020fast}
K.~Xu, H.~Zhang, S.~Wang, Y.~Wang, S.~Jana, X.~Lin, and C.-J. Hsieh, ``Fast and
  complete: Enabling complete neural network verification with rapid and
  massively parallel incomplete verifiers,'' {\em arXiv preprint
  arXiv:2011.13824}, 2020.

\bibitem{Angwin2013}
J.~Angwin, J.~Larson, S.~Mattu, and L.~Kirchner, ``How we analyzed the compas
  recidivism algorithm,'' 2016.

\bibitem{wightman1998lsac}
L.~F. Wightman, {\em LSAC national longitudinal bar passage study}.
\newblock Law School Admission Council, 1998.

\bibitem{feldman2015certifying}
M.~Feldman, S.~A. Friedler, J.~Moeller, C.~Scheidegger, and
  S.~Venkatasubramanian, ``Certifying and removing disparate impact,'' in {\em
  proceedings of the 21th ACM SIGKDD international conference on knowledge
  discovery and data mining}, pp.~259--268, 2015.

\bibitem{hardt2016equality}
M.~Hardt, E.~Price, and N.~Srebro, ``Equality of opportunity in supervised
  learning,'' {\em Advances in neural information processing systems}, vol.~29,
  2016.

\bibitem{aws2017fairsquare}
A.~Albarghouthi, L.~D'Antoni, S.~Drews, and A.~V. Nori, ``Fairsquare:
  Probabilistic verification of program fairness,'' {\em Proc. ACM Program.
  Lang.}, vol.~1, oct 2017.

\bibitem{bastani2019probabilistic}
O.~Bastani, X.~Zhang, and A.~Solar-Lezama, ``Probabilistic verification of
  fairness properties via concentration,'' {\em Proceedings of the ACM on
  Programming Languages}, vol.~3, no.~OOPSLA, pp.~1--27, 2019.

\bibitem{urban2020perfectly}
C.~Urban, M.~Christakis, V.~W\"{u}stholz, and F.~Zhang, ``Perfectly parallel
  fairness certification of neural networks,'' {\em Proc. ACM Program. Lang.},
  vol.~4, nov 2020.

\bibitem{katz2019marabou}
G.~Katz, D.~A. Huang, D.~Ibeling, K.~Julian, C.~Lazarus, R.~Lim, P.~Shah,
  S.~Thakoor, H.~Wu, A.~Zelji{\'c}, {\em et~al.}, ``The marabou framework for
  verification and analysis of deep neural networks,'' in {\em Computer Aided
  Verification} (I.~Dillig and S.~Tasiran, eds.), pp.~443--452, Springer
  International Publishing, 2019.

\bibitem{KatzReluplexEfficientSMT2017a}
G.~Katz, C.~Barrett, D.~L. Dill, K.~Julian, and M.~J. Kochenderfer, ``Reluplex:
  {{An Efficient SMT Solver}} for {{Verifying Deep Neural Networks}},'' in {\em
  Computer {{Aided Verification}}} (R.~Majumdar and V.~Kun{\v c}ak, eds.),
  Lecture {{Notes}} in {{Computer Science}}, pp.~97--117, {Springer
  International Publishing}, 2017.

\bibitem{ehlers2017formal}
R.~Ehlers, ``Formal verification of piece-wise linear feed-forward neural
  networks,'' in {\em International Symposium on Automated Technology for
  Verification and Analysis} (D.~D'Souza and K.~N. Kumar, eds.), pp.~269--286,
  Springer, 2017.

\bibitem{lomuscio2017approach}
A.~Lomuscio and L.~Maganti, ``An approach to reachability analysis for
  feed-forward relu neural networks,'' 2017.

\bibitem{tjeng2017evaluating}
V.~Tjeng, K.~Xiao, and R.~Tedrake, ``Evaluating robustness of neural networks
  with mixed integer programming,'' 2017.

\bibitem{bastani2016measuring}
O.~Bastani, Y.~Ioannou, L.~Lampropoulos, D.~Vytiniotis, A.~Nori, and
  A.~Criminisi, ``Measuring neural net robustness with constraints,'' in {\em
  Advances in Neural Information Processing Systems}, vol.~29, pp.~2613--2621,
  2016.

\bibitem{bunel2020branch}
R.~Bunel, J.~Lu, I.~Turkaslan, P.~Kohli, P.~Torr, and P.~Mudigonda, ``Branch
  and bound for piecewise linear neural network verification,'' {\em Journal of
  Machine Learning Research}, vol.~21, no.~42, pp.~1--39, 2020.

\bibitem{fischetti2018deep}
M.~Fischetti and J.~Jo, ``Deep neural networks and mixed integer linear
  optimization,'' {\em Constraints}, vol.~23, no.~3, pp.~296--309, 2018.

\bibitem{anderson2020strong}
R.~Anderson, J.~Huchette, W.~Ma, C.~Tjandraatmadja, and J.~P. Vielma, ``Strong
  mixed-integer programming formulations for trained neural networks,'' {\em
  Mathematical Programming}, vol.~183, no.~1, pp.~3--39, 2020.

\bibitem{cheng2017maximum}
C.-H. Cheng, G.~N{\"u}hrenberg, and H.~Ruess, ``Maximum resilience of
  artificial neural networks,'' in {\em Automated Technology for Verification
  and Analysis} (D.~D'Souza and K.~Narayan~Kumar, eds.), pp.~251--268,
  Springer, 2017.

\bibitem{xiang2017reachable}
W.~Xiang, H.-D. Tran, and T.~T. Johnson, ``Reachable set computation and safety
  verification for neural networks with relu activations,'' 2017.

\bibitem{xiang2018output}
W.~Xiang, H.-D. Tran, and T.~T. Johnson, ``Output reachable set estimation and
  verification for multilayer neural networks,'' {\em IEEE transactions on
  neural networks and learning systems}, vol.~29, no.~11, pp.~5777--5783, 2018.

\bibitem{gehr2018ai2}
T.~Gehr, M.~Mirman, D.~Drachsler-Cohen, P.~Tsankov, S.~Chaudhuri, and
  M.~Vechev, ``A{I}2: Safety and robustness certification of neural networks
  with abstract interpretation,'' in {\em 2018 IEEE Symposium on Security and
  Privacy (SP)}, pp.~3--18, IEEE, 2018.

\bibitem{wang2018formal}
S.~Wang, K.~Pei, J.~Whitehouse, J.~Yang, and S.~Jana, ``Formal security
  analysis of neural networks using symbolic intervals,'' in {\em Proceedings
  of the 27th {USENIX} Conference on Security Symposium}, {SEC}'18,
  pp.~1599--1614, {USENIX} Association, 2018.

\bibitem{tran2020nnv}
H.-D. Tran, X.~Yang, D.~Manzanas~Lopez, P.~Musau, L.~V. Nguyen, W.~Xiang,
  S.~Bak, and T.~T. Johnson, ``Nnv: The neural network verification tool for
  deep neural networks and learning-enabled cyber-physical systems,'' in {\em
  Computer Aided Verification} (S.~K. Lahiri and C.~Wang, eds.), pp.~3--17,
  Springer International Publishing, 2020.

\bibitem{ivanov2019verisig}
R.~Ivanov, J.~Weimer, R.~Alur, G.~J. Pappas, and I.~Lee, ``Verisig: verifying
  safety properties of hybrid systems with neural network controllers,'' in
  {\em Proceedings of the 22nd ACM International Conference on Hybrid Systems:
  Computation and Control}, HSCC '19, (New York, NY, USA), pp.~169--178,
  Association for Computing Machinery, 2019.

\bibitem{fazlyab2019efficient}
M.~Fazlyab, A.~Robey, H.~Hassani, M.~Morari, and G.~Pappas, ``Efficient and
  accurate estimation of lipschitz constants for deep neural networks,'' in
  {\em Advances in Neural Information Processing Systems} (H.~Wallach,
  H.~Larochelle, A.~Beygelzimer, F.~d\textquotesingle Alch\'{e}-Buc, E.~Fox,
  and R.~Garnett, eds.), vol.~32, pp.~11423--11434, Curran Associates, Inc.,
  2019.

\bibitem{dvijotham2018dual}
K.~Dvijotham, R.~Stanforth, S.~Gowal, T.~A. Mann, and P.~Kohli, ``A dual
  approach to scalable verification of deep networks.,'' in {\em Uncertainty in
  Artificial Intelligence} (A.~Globerson and R.~Silva, eds.), vol.~1,
  pp.~550--559, 2018.

\bibitem{henriksen2021deepsplit}
P.~Henriksen and A.~Lomuscio, ``Deepsplit: An efficient splitting method for
  neural network verification via indirect effect analysis,''

\bibitem{wang2021beta}
S.~Wang, H.~Zhang, K.~Xu, X.~Lin, S.~Jana, C.-J. Hsieh, and J.~Z. Kolter,
  ``Beta-crown: Efficient bound propagation with per-neuron split constraints
  for complete and incomplete neural network verification,'' {\em arXiv
  preprint arXiv:2103.06624}, 2021.

\bibitem{binns2020apparent}
R.~Binns, ``On the apparent conflict between individual and group fairness,''
  in {\em Proceedings of the 2020 conference on fairness, accountability, and
  transparency}, pp.~514--524, 2020.

\bibitem{bonilla2006racism}
E.~Bonilla-Silva, {\em Racism without racists: Color-blind racism and the
  persistence of racial inequality in the United States}.
\newblock Rowman \& Littlefield Publishers, 2006.

\bibitem{taslitz2007racial}
A.~E. Taslitz, ``Racial blindsight: The absurdity of color-blind criminal
  justice,'' {\em Ohio St. J. Crim. L.}, vol.~5, p.~1, 2007.

\bibitem{pedreshi2008}
D.~Pedreshi, S.~Ruggieri, and F.~Turini, ``Discrimination-aware data mining,''
  in {\em Proceedings of the 14th ACM SIGKDD International Conference on
  Knowledge Discovery and Data Mining}, KDD '08, (New York, NY, USA),
  p.~560–568, Association for Computing Machinery, 2008.

\bibitem{amazon}
Amazon, ``Amazon doesn’t consider the race of its customers. should it?.''

\end{thebibliography}

\clearpage

\clearpage


\end{document}